\newtheorem{theorem}{Theorem}[section]
\newtheorem{lemma}[theorem]{Lemma}
\newtheorem{assumption}{Assumption}
\newcommand{\norm}[1]{\left\lVert#1\right\rVert}
\crefname{section}{Sec.}{Secs.}
\Crefname{section}{Section}{Sections}
\Crefname{table}{Table}{Tables}
\crefname{table}{Tab.}{Tabs.}
\begin{document}

\title{Surface-Aligned Neural Radiance Fields for Controllable 3D Human Synthesis}

\author{
Tianhan Xu\thanks{Work done while the first author interned at Preferred Networks, Inc.}
\\
The University of Tokyo\\
{\tt\small tianhan.xu@mi.t.u-tokyo.ac.jp}
\and
Yasuhiro Fujita \qquad Eiichi Matsumoto\\
Preferred Networks, Inc.\\
{\tt\small \{fujita, matsumoto\}@preferred.jp}
}
\maketitle

\begin{abstract}

\vspace{-0.5em}

We propose a new method for reconstructing controllable implicit 3D human models from sparse multi-view RGB videos. Our method defines the neural scene representation on the mesh surface points and signed distances from the surface of a human body mesh. We identify an indistinguishability issue that arises when a point in 3D space is mapped to its nearest surface point on a mesh for learning surface-aligned neural scene representation. To address this issue, we propose projecting a point onto a mesh surface using a barycentric interpolation with modified vertex normals. Experiments with the ZJU-MoCap and Human3.6M datasets show that our approach achieves a higher quality in a novel-view and novel-pose synthesis than existing methods. We also demonstrate that our method easily supports the control of body shape and clothes. Project page: \url{https://pfnet-research.github.io/surface-aligned-nerf/}.
\end{abstract}
\vspace{-0.8em}

\section{Introduction}
\label{sec:intro}

Human body modeling is a long-studied topic for its wide range of real-world applications. In visual applications such as movies or games, which often require free-viewpoint rendering, it is common to expect 3D human models to have controllable properties such as pose, shape, and clothes. Because manually designing high-quality 3D human models is usually labor-intensive, increasing studies~\cite{ma2020cape, alldieck2018video, Alldieck_2019_CVPR, alldieck2019tex2shape, 2021narf, liu2021neural, peng2021neural, peng2021animatable} have proposed the reconstruction of 3D human models using only 2D observations. In this paper, we focus on the free-viewpoint 3D human synthesis with the above controllable properties from sparse multi-view RGB videos. 

Early approaches~\cite{Carranza2003actor, Vlasic2008articulated} deformed the pre-scanned template meshes with a skeleton for modeling a human shape and/or texture. Parametric 3D human models~\cite{scape2005, SMPL:2015, SMPL-X:2019} have been proposed to reconstruct rough human meshes with pose and body shape estimation. Subsequent studies~\cite{ma2020cape, alldieck2018video, Alldieck_2019_CVPR, alldieck2019tex2shape} introduced more features, such as per-vertex deformation or texture, to express richer details. Such parametric model-based approaches, despite having good controllable properties, show limitations in representing clothed humans, particularly when the actual shape differs significantly from the base parametric mesh estimation.

Neural radiance fields (NeRF)~\cite{mildenhall2020nerf}, a new form of 3D scene representation, has recently become the new baseline method in 3D reconstruction for its photorealistic rendering results of novel camera view. NeRF represents the scene as a continuous volumetric representation using a neural network to regress the color and density at a given query point from a given view direction. Several approaches~\cite{2021narf, peng2021neural, peng2021animatable, liu2021neural} have been proposed to incorporate knowledge from a statistical 3D human model and its pose estimation with NeRF. They differ in how a query point is transformed and represented. Deformation-based approaches~\cite{peng2021animatable, liu2021neural} use a deformation field to transform the query point from the observation space to a pose-independent canonical space and then build NeRF in the canonical space. Other approaches transform the query points into local coordinate systems~\cite{2021narf} or to a latent code representation~\cite{peng2021neural}, with the help of human pose estimation.

In this paper, we propose a new approach for achieving a dynamic human reconstruction by combining parametric 3D body models such as SMPL~\cite{SMPL:2015} with NeRF. The basic idea of our approach is straightforward and simple: we propose building a NeRF on the mesh surface.
We devise an algorithm to map a query point to a mesh surface point with a signed height, which can represent the local position of the point with respect to the mesh.
Using the information of the surface point position and the signed height of a query point as input, we build a surface-aligned NeRF that is aligned with the mesh surface; thus, it can be easily deformed or controlled according to the base SMPL model.

Our approach has the following advantages: First, with the help of the devised mapping algorithm, our method does not rely on a learned deformation field, saving the number of learned parameters. Second, the models reconstructed using our method can be controlled directly by the SMPL parameters, that is, both poses and body shapes. Third, due to the surface-aligned property, our approach shows a better generalization ability for a novel human pose synthesis. 

In summary, our contributions are as follows:
\begin{itemize}
\item We propose an algorithm that can injectively map a spatial point to a novel surface-aligned representation that consists of a projected surface point and a signed height to the mesh surface.
\item We propose novel surface-aligned neural radiance fields using the proposed mapping, which can be easily controlled using the SMPL parameters. Compared to existing methods, our approach shows a better generalization performance on a novel view and novel pose synthesis while supporting manipulations such as changes to the body shape and clothes.
\end{itemize}

\section{Related Work}
\label{sec:relatedwork}

\paragraph{Human body modeling.}
Early studies proposed parametric mesh models such as SCAPE~\cite{scape2005} or SMPL~\cite{SMPL:2015} to model the shape of the human body. SMPL recovers the human mesh given the skeleton pose and body shape and is commonly used as the basis for controllable human modeling. However, one of the problems with the parametric model is that it can only model the naked human body. Subsequent studies proposed a per-vertex deformation based on the SMPL for better modeling of clothed-human details~\cite{ma2020cape, alldieck2018video, Alldieck_2019_CVPR, alldieck2019tex2shape}.
Due to the strong expressiveness of the implicit 3D representation, recent studies proposed combining it with the SMPL to better capture the shape and appearance of a human body~\cite{huang_arch_2020, bhatnagar2020ipnet}. In~\cite{Yang_2021_CVPR}, the shape, pose, and skinning weights are represented with neural implicit functions to model a dynamic human body. In~\cite{deng2020nasa}, a pose-conditioned implicit occupancy function is used to predict the shape of an articulated human body. There are also studies~\cite{pifuSHNMKL19, saito2020pifuhd, natsume_siclope:_2019} that addressed reconstruction of clothed humans from a single image.

\paragraph{Neural radiance fields for a dynamic human body.}
Recently, neural radiance fields (NeRF)~\cite{mildenhall2020nerf} has become a common building block for photorealistic novel view synthesis. Some studies have improved on the original NeRF to enable it to model dynamic scenes~\cite{park2021nerfies, pumarola2020d, li2020scene, du2021nerflow, tretschk2021nonrigid, park2021hypernerf}. The key idea of these methods is to introduce a deformation field that maps the observation space to a canonical space and builds a NeRF in the canonical space. However, optimizing both the deformation field and NeRF is an under-constrained problem that can cause implausible results or artifacts~\cite{park2021nerfies}.

For modeling a dynamic human body, recent studies~\cite{peng2021neural, peng2021animatable, 2021narf, liu2021neural} have proposed the use of prior knowledge of human pose and skinning weights of SMPL~\cite{SMPL:2015} to ease the learning of a deformation field. 
Animatable NeRF~\cite{peng2021animatable} uses the skinning weight of SMPL~\cite{SMPL:2015} to predict the neural blend shape fields. Neural Actor~\cite{liu2021neural} predicts the texture map from the posed mesh and utilizes it as additional information to help predict the deformation fields.
However, learning a deformation field leads to a significant increase in network parameters, as well as the cost and difficulty of training and inference. Neural Body~\cite{peng2021neural} uses the structured latent code anchored at the SMPL vertices to encode the pose information. However, this approach shows poor performance for a novel pose synthesis because the encoding process of the latent code strongly relies on the training pose. NARF~\cite{2021narf} uses the human skeleton to transform spatial points into bone coordinates to learn a locally-defined NeRF, which has some conceptual similarity to our approach. 
However, because such bone coordinates have no surface information, they do not fully utilize a human body prior from SMPL nor support manipulations such as body shape control.

\section{Method}
\label{sec:method}
\begin{figure*}[t]
  \centering
  \includegraphics[width=\linewidth]{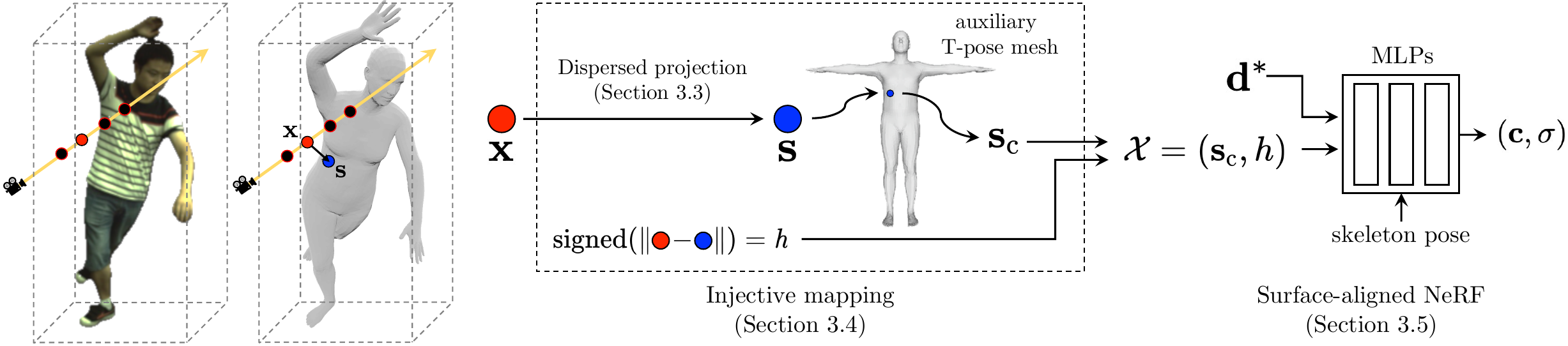}
  \caption{\textbf{An overview of our approach.} Given a query point $\mathbf{x} \in \mathbb{R}^3$, we use the proposed dispersed projection to project it onto a point $\mathbf{s} \in \mathbb{R}^3$ on the mesh surface to obtain a surface-aligned representation $\mathcal{X}$. The representation $\mathcal{X}$ and the view direction $\mathbf{d}^*$ are then input into the NeRF to compute the color $\mathbf{c}$ and density $\sigma$ of the query point $\mathbf{x}$. }
  \label{fig:pipeline}
\end{figure*}

We aim to reconstruct a 3D human model from sparse multi-view videos that can be photorealistically rendered with a fully controllable camera view, human pose, and body shape. We assume that, for each video frame, we have approximate human pose and shape information (\eg, SMPL~\cite{SMPL:2015} parameters) and the foreground mask using off-the-shelf methods~\cite{gong2018instance}.

An overview of the proposed approach is shown in \cref{fig:pipeline}. We propose feeding a NeRF (\cref{sec:nerf_rev}) with a novel representation (\cref{sec:local}) consisting of a surface point and signed height, which is calculated through the proposed dispersed projection (\cref{sec:dispersed}) given a query point. We call this variant of thet NeRF a surface-aligned NeRF (\cref{sec:local_nerf}).

\subsection{Neural radiance fields revisited}
\label{sec:nerf_rev}

Neural Radiance Fields (NeRF)~\cite{mildenhall2020nerf} uses a neural network to model a scene in a continuous volumetric representation. A neural network $f$ calculates the RGB color $\mathbf{c}$ and the density $\sigma$ at a given spatial position $\mathbf{x}$ from a given viewpoint $\mathbf{d}$, which can be written as: 
\begin{equation}
    f: (\mathbf{x}, \mathbf{d}) \rightarrow (\mathbf{c}, \sigma)
\end{equation}
NeRF-based methods typically use position information $\mathbf{x} \in \mathbb{R}^3$ in a fixed three-dimensional Euclidean space (\eg, the world coordinate). To model dynamic human bodies, recent studies have proposed some transformation or new representation of position information as the input of the NeRF. Specifically, NARF~\cite{2021narf} transforms the position information of spatial points into each bone coordinate and uses all of them as input; Neural Body~\cite{peng2021neural} uses a neural network to compute a latent code representing the position information relative to the body mesh and uses it as input. 

\subsection{Surface-aligned representation}
\label{sec:local}

We assume that the motion of the human body can be divided into the following three components according to the order of dominance: 1) surface-aligned components that follow the deformation of the body mesh, 2) pose-dependent components (\eg, clothes deformation caused by pose change), 3) the rest of the time-varying components (\eg, fluttering from the wind). 
We propose a new representation, called \textit{surface-aligned representation}, for representing the positions of the query point $\textbf{x} \in \mathbb{R}^3$ with respect to the body mesh, thus describing the most dominant mesh-following deformation components.
Specifically, our proposed surface-aligned representation consists of two components: 1) The position information of surface point $\textbf{s}$ obtained by projecting $\textbf{x}$ onto the mesh surface.
2) The signed distance $h$ between $\textbf{x}$ and the projection point $\textbf{s}$, which represents the ``height'' of $\textbf{x}$ to the mesh surface.
To represent $\textbf{s}$ in a pose-independent manner, we map it to the surface point on the shared T-pose body mesh.
Specifically, for $\textbf{s}$ on the posed mesh surface, we compute the point with the same barycentric coordinate of the same triangle face on the T-pose mesh surface and use its spatial coordinates $\textbf{s}_c = (x_c, y_c, z_c)$. 
The proposed surface-aligned representation can be written as:
\begin{equation}
    \mathcal{X} = (\textbf{s}_c, h) = (x_c, y_c, z_c, h) \in \mathbb{R}^4
    \label{eq:local_repr}
\end{equation}
We map the spatial point $\mathbf{x}$ to the surface-aligned representation $\mathcal{X}$ and use it as input to the NeRF.

\begin{figure}[t]
  \centering
  \includegraphics[width=\linewidth]{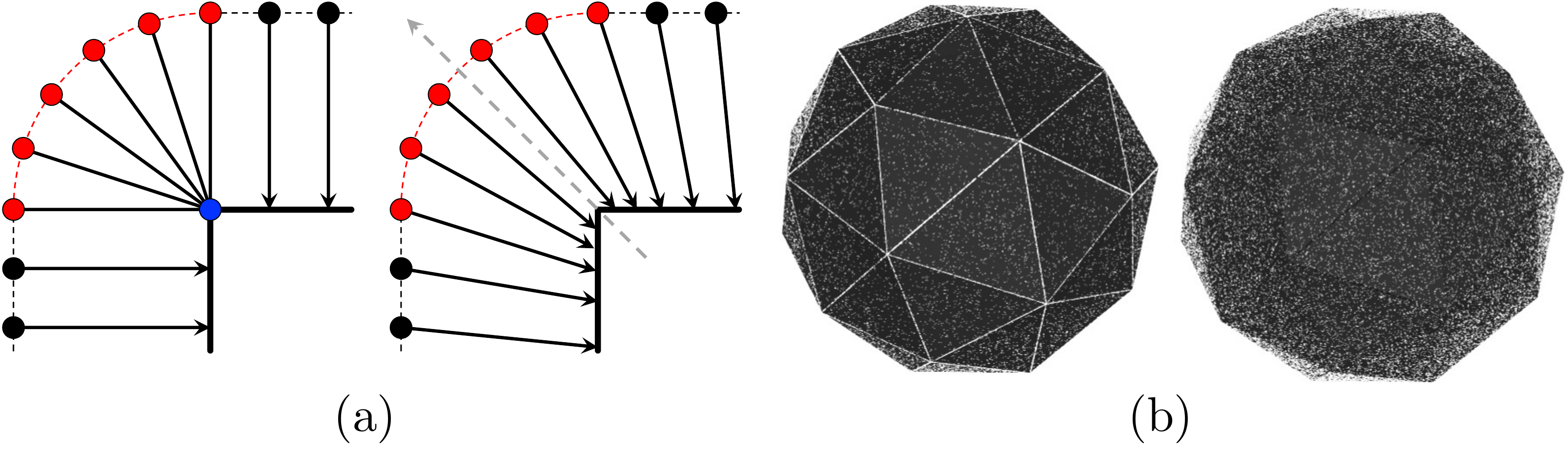}
  \caption{\textbf{Comparison of different projection methods.} 
  \textbf{(a)} Toy example in 2D. We project the points of the same height onto the surface (thick line). Left: nearest point projection projects all the red points onto the vertex, thus obtaining the same surface-aligned representation. Right: proposed dispersed projection (here equals to the barycentric interpolated projection), in which all points can be projected onto different and distinguishable points.
  \textbf{(b)} We randomly sample points in space and project them onto the mesh surface. Left: nearest point projection, projections tend to concentrate on the vertices or edges. Right: proposed dispersed projection, projections are well distributed.}
  \label{fig:projection_compare}
  \vspace{-0.5em}
\end{figure}

\begin{figure*}[t]
  \centering
  \includegraphics[width=\linewidth]{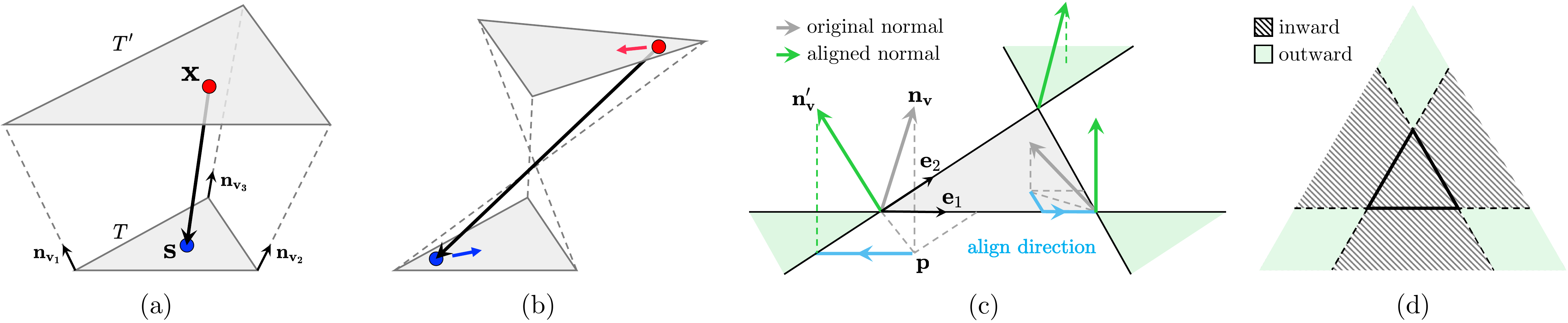}
  \caption{\textbf{Illustration of barycentric interpolated projection and vertex normal alignment.} (a) Illustration of barycentric interpolated projection (b) Example of ``inverted'' projection that occurs when some vertex normals are facing inward. (c) Illustration of vertex normal alignment. (d) Inward and outward regions, viewed from directly above the triangle. }
  \label{fig:projection_detail}
\end{figure*}

\subsection{Dispersed projection}
\label{sec:dispersed}
We propose a method of  projecting a spatial point $\textbf{x}$ onto the mesh surface to obtain $\textbf{s}$, called \textit{dispersed projection}, which consists of two key components: \textit{barycentric interpolated projection} and \textit{vertex normal alignment}. We first introduce them separately and then show the detailed steps of the dispersed projection. Note that in the following \textit{we only consider the case of $h > 0$ (a point outside the mesh)} because the idea is exactly the same as when points are inside (in that case, we can revert the normal direction).
We assume that each vertex normals are from the inside of the mesh to the outside, that is, formally, the inner products are positive for the vertex normal with the face normals of all triangle faces that share this vertex.

\vspace{-1em}
\paragraph{Nearest point projection.}
An obvious way to project a spatial point onto a surface is what we call \textit{nearest point projection}, that is, calculating the point on the surface that is closest to a given point.
Animatable NeRF~\cite{peng2021animatable} and Neural Actor~\cite{liu2021neural} rely on this way of projection for learning a deformation field and/or utilizing a texture map.
While using nearest point projection to obtain $\mathbf{s}$ is straightforward, we note that the mapping $\mathbf{x} \rightarrow \mathcal{X}$ using nearest point projection is not an injection (or one-to-one mapping), thus causing an indistinguishability issue in optimizing NeRF. As the 2D illustration on the left side of \cref{fig:projection_compare}(a) shows, the red points on the arc (with the same height $h$) are all projected onto the blue point, being indistinguishable in the resulting representation. In the case of a 3D mesh, as shown in \cref{fig:projection_compare}(b)~Left, projected points are concentrated on vertices or edges.
The key issue is that NeRF would output the same color and density for points with different spatial positions but the same surface-aligned representation $\mathcal{X}$, failing to capture sufficient details.

\vspace{-1em}
\paragraph{Barycentric interpolated projection.} 

We propose a novel projection method to address the indistinguishability issue in nearest point projection, named \textit{barycentric interpolated projection}.
We illustrate the method in \cref{fig:projection_detail}(a). For a spatial point $\mathbf{x}$ and a triangle $T$ on the mesh surface, we take the plane that passes through $\mathbf{x}$ and is parallel to $T$. The intersection of this plane with the three vertex normals forms a parallel triangle $T'$. Then, we compute the barycentric coordinate of $\mathbf{x}$ in the triangle $T'$. Finally, using this barycentric coordinate for the triangle $T$, we can obtain the point $\mathbf{s}$ on the mesh surface through a barycentric interpolation.

\vspace{-1em}
\paragraph{Vertex normal alignment.} 
While barycentric interpolated projection works as expected when the vertex normals all appear to be facing \textit{outward}, that is, intuitively, the parallel triangle $T'$ is larger than and therefore able to ``cover'' $T$, allowing nearby points to be projected onto $T$ (see \cref{fig:projection_detail}(a)).
However, when not all the vertex normals face \textit{outward}, as shown in \cref{fig:projection_detail}(b), the resulting projection may be ``inverted''.
To address this, we propose to perform a procedure named \textit{vertex normal alignment}. First, we provide a formal definition of \textit{inward} and \textit{outward}. Let us consider the orthogonal projections of vertex normals of a triangle on a plane. As shown in \cref{fig:projection_detail}(c) and (d), we divide the plane into \textit{inward} and \textit{outward} regions. If a vertex normal's projection falls in the \textit{inward} and \textit{outward} regions, we call the vertex normal \textit{inward} and \textit{outward}, respectively.
We align all the \textit{inward} normals of a triangle as follows.
First, we consider the orthogonal projection point $\mathbf{p}$ of $\mathbf{n_v}$. We move $\mathbf{n_v}$ along the directions $\mathbf{e}_1$ and  $\mathbf{e}_2$ of the two edges until $\mathbf{p}$ reaches the \textit{outward} region, obtaining the aligned direction $\mathbf{n'_v}$. Finally, we normalize its length to $1$ as the aligned vertex normal. 
Note that vertex normal alignment is performed separately for each triangle, and thus even the same vertex, shared by different triangles, may have differently aligned vertex normals.
After vertex normal alignment, the above-mentioned issues of the barycentric interpolated projection can be avoided, and we can obtain more reasonable and distinguishable projection points.

\vspace{-1em}
\paragraph{Detailed steps of dispersed projection.} We show the detailed steps of projecting a spatial point $\mathbf{x}$ to the mesh surface point $\mathbf{s}$ by combining barycentric interpolated projection and vertex normal alignment.

\begin{enumerate}
\itemsep 0.4em
    \item Compute the nearest point $\Tilde{\mathbf{s}}$ on the mesh surface.
    \item Find a set $\mathcal{T}$ of all triangles containing $\Tilde{\mathbf{s}}$. When $\Tilde{\mathbf{s}}$ falls inside a triangle, obviously only that triangle contains $\Tilde{\mathbf{s}}$; however, when it falls on a vertex or an edge, there will be multiple triangles containing $\Tilde{\mathbf{s}}$.
    \item Apply vertex normal alignment to all the triangles in $\mathcal{T}$.
    \item For each $T \in \mathcal{T}$, if $\mathbf{x}$ is not inside its parallel triangle $T'$, remove it from $\mathcal{T}$. There should be at least one triangle in $\mathcal{T}$ such that $\mathbf{x}$ is inside its $T'$.
    \item Apply barycentric interpolated projection for each $T \in \mathcal{T}$ to obtain the projected surface point $\{ \mathbf{s} \}$.
    \item Choose the nearest point to $\mathbf{x}$ from $\{ \mathbf{s} \}$ as the final projection point.
\end{enumerate}

\subsection{Injective mapping to a surface-aligned representation}
By projecting $\mathbf{x}$ onto the surface point $\mathbf{s}$ using the proposed dispersed projection, as described in \cref{sec:local}, we can compute the surface-aligned representation $\mathcal{X}$ of $\mathbf{x}$. We emphasize that, using the proposed dispersed projection, the mapping $\mathbf{x} \rightarrow \mathcal{X}$ is an injection under certain conditions. Thus, different spatial points are mapped to different representations, eliminating the indistinguishability issue. The proof can be found in the supplementary materials.

\subsection{Surface-aligned neural radiance fields}
\label{sec:local_nerf}
Our goal is to build neural radiance fields aligned with the mesh surface that can change with the mesh deformation.
To this end, we feed the surface-aligned representation $\mathcal{X}$ (\cref{eq:local_repr}) of a spatial point $\mathbf{x}$ to NeRF as input. 
Furthermore, we consider the view direction in both the world and local coordinates. Here, the local coordinates refer to the coordinates formed by the normal, tangent, and bitangent directions on the surface of the posed mesh at $\mathbf{s}$. 
We denote the view direction in the world coordinates and the local coordinates as $\mathbf{d}$ and $\mathbf{d}_l$, respectively.
We concatenate them together $\mathbf{d}^* = \mathrm{concat}(\mathbf{d}, \mathbf{d}_l)$ and input to the NeRF.
This can be written as:
\begin{equation}
    \label{eq:naive_nerf}
    F_\Theta: (\mathcal{X}, \mathbf{d}^*) \rightarrow (\mathbf{c}, \sigma)
\end{equation}
where $\Theta$ denotes the model parameters. 
However, surface-aligned representation $\mathcal{X}$ can only capture the motion that strictly follows the body mesh deformation; \cref{eq:naive_nerf} cannot model some subtle pose-dependent deformation such as loose clothing.
To address this issue, we also condition our model on the skeleton pose parameter $p$ to enable learning the pose-dependent deformation. 
Specifically, we use an encoder network to encode the pose information as a latent code $\mathbf{z}_p$ and feed it as an additional input to the NeRF:
\begin{equation}
\label{eq:pose_nerf}
    F_\Theta: (\mathcal{X}, \mathbf{d}^*, \mathbf{z}_p) \rightarrow (\mathbf{c}, \sigma)
\end{equation}
This is our proposed surface-aligned NeRF.

\subsection{Training}
\label{sec:train}
We use volume rendering~\cite{kajiya1986render, mildenhall2020nerf} to synthesize images with a NeRF.
Following the previous studies~\cite{mildenhall2020nerf, peng2021neural, peng2021animatable}, we minimize the per-pixel mean squared error (MSE) between the rendered image and the ground truth image.
We use SMPL~\cite{SMPL:2015} as our model's underlying body mesh model with estimated shape and pose parameters.
To reduce the impact of an inaccurate SMPL parameter estimation, we optimize the pose parameter of the SMPL simultaneously with the NeRF using Adam optimizer~\cite{kingma:adam}.
Additional training details can be found in the supplementary material.

\section{Experiments}
\label{sec:experiments}

\begin{figure*}[t]
  \centering
  \includegraphics[width=\linewidth]{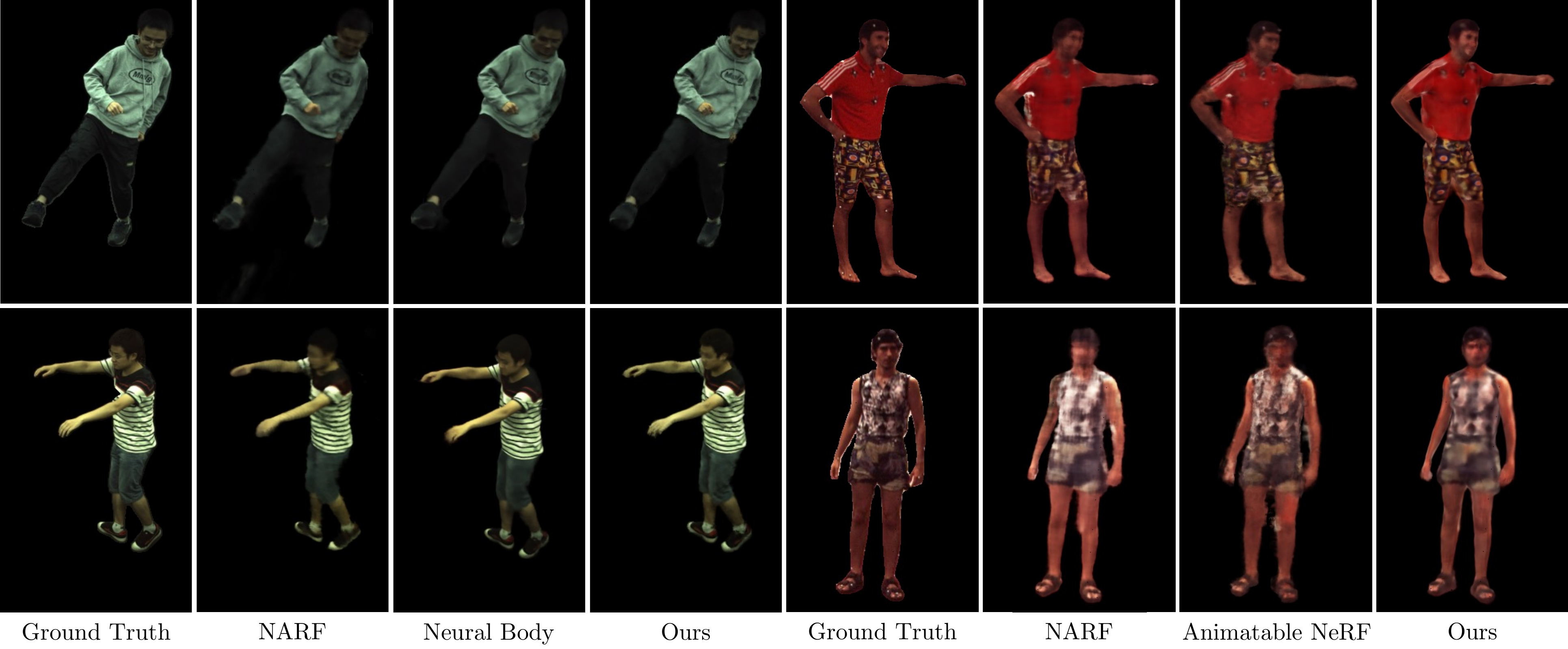}
  \caption{\textbf{Qualitative results of novel view synthesis for the training pose.} Left: ZJU-MoCap dataset. Right: Human3.6M dataset.}
  \label{fig:novel_view}
\end{figure*}

\begin{figure*}[t]
  \centering
  \includegraphics[width=\linewidth]{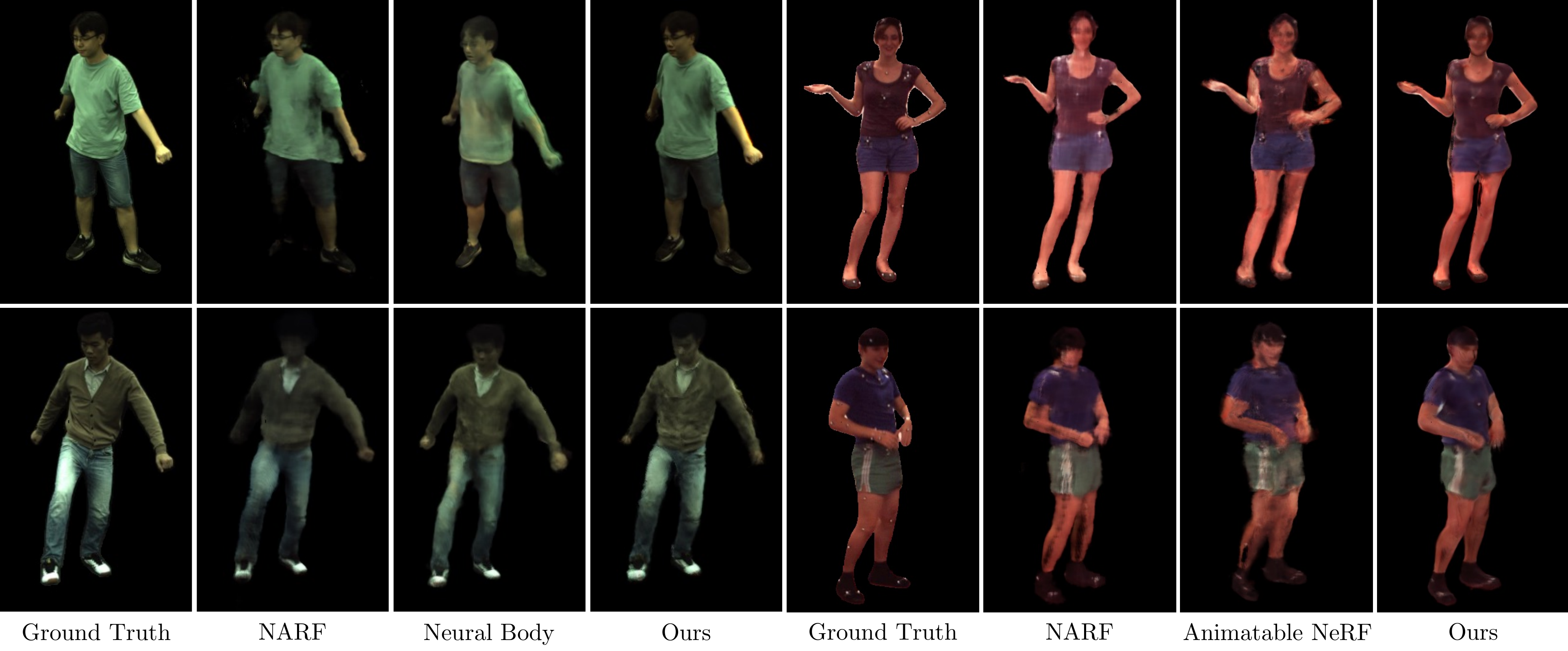}
  \caption{\textbf{Qualitative results of novel view synthesis for the unseen pose.} Left: ZJU-MoCap dataset. Right: Human3.6M dataset.}
  \label{fig:novel_pose}
\end{figure*}

\begin{table*}
\begin{center}
\resizebox{\textwidth}{!}{
\begin{tabular}{c|ccc|ccc|ccc|ccc}
& \multicolumn{6}{c|}{\textbf{Training pose}} & \multicolumn{6}{c}{\textbf{Unseen pose}} \\ \hline
& \multicolumn{3}{c|}{PSNR} & \multicolumn{3}{c|}{SSIM} & \multicolumn{3}{c|}{PSNR} & \multicolumn{3}{c}{SSIM} \\
\hline
& NARF\cite{2021narf} & NB\cite{peng2021neural} & Ours & NARF\cite{2021narf} & NB\cite{peng2021neural} & Ours & NARF\cite{2021narf} & NB\cite{peng2021neural} & Ours & NARF\cite{2021narf} & NB\cite{peng2021neural} & Ours \\
\hline
Twirl  & 29.38 & 30.56 & \textbf{31.32} & 0.967 & 0.971 & \textbf{0.974} & 22.20 & 23.95 & \textbf{24.33} & 0.872 & 0.905 & \textbf{0.908} \\
Taichi & 24.22 & 27.24 & \textbf{27.25} & 0.930 & 0.962 & 0.962 & 19.70 & 19.56 & \textbf{19.87} & 0.859 & 0.852 & \textbf{0.863} \\
Swing1 & 27.53 & \textbf{29.44} & 29.29 & 0.929 & 0.946 & 0.946 & 25.43 & 25.76 & \textbf{26.27} & 0.912 & 0.909 & \textbf{0.927} \\
Swing2 & 27.54 & 28.44 & \textbf{28.76} & 0.928 & 0.940 & \textbf{0.941} & 24.03 & 23.80 & \textbf{24.96} & 0.884 & 0.878 & \textbf{0.900} \\
Swing3 & 26.56 & \textbf{27.58} & 27.50 & 0.925 & \textbf{0.939} & 0.938 & 23.84 & 23.25 & \textbf{24.24} & 0.901 & 0.893 & \textbf{0.908} \\
Warmup & 25.89 & 27.64 & \textbf{27.67} & 0.931 & 0.951 & \textbf{0.954} & 24.14 & 23.91 & \textbf{25.34} & 0.906 & 0.909 & \textbf{0.928} \\
Punch1 & 25.98 & 28.60 & \textbf{28.81} & 0.895 & 0.931 & 0.931 & 25.24 & 25.68 & \textbf{27.30} & 0.877 & 0.881 & \textbf{0.905} \\
Punch2 & 24.78 & 25.79 & \textbf{26.08} & 0.915 & 0.928 & \textbf{0.929} & 22.58 & 21.60 & \textbf{23.08} & 0.885 & 0.870 & \textbf{0.890} \\
Kick   & 26.42 & 27.59 & \textbf{27.77} & 0.913 & 0.926 & \textbf{0.927} & 23.53 & 23.90 & \textbf{24.43} & 0.872 & 0.870 & \textbf{0.889} \\
\hline                                                
average. & 26.48 & 28.10 & \textbf{28.27} & 0.926 & 0.944 & \textbf{0.945} & 23.41 & 23.49 & \textbf{24.42} & 0.885 & 0.885 & \textbf{0.902} \\
\end{tabular}
}
\end{center}
\vspace{-1em}
\caption{\textbf{Results of the ZJU-MoCap dataset ~\cite{peng2021neural} in terms of PSNR and SSIM.} Higher is better. ``NB" means Neural Body.}
\label{table:results_zju}
\end{table*}

\begin{table*}
\begin{center}
\resizebox{\textwidth}{!}{
\begin{tabular}{c|ccc|ccc|ccc|ccc}
& \multicolumn{6}{c|}{\textbf{Training pose}} & \multicolumn{6}{c}{\textbf{Unseen pose}} \\ \hline
& \multicolumn{3}{c|}{PSNR} & \multicolumn{3}{c|}{SSIM} & \multicolumn{3}{c|}{PSNR} & \multicolumn{3}{c}{SSIM} \\
\hline
& NARF\cite{2021narf} & AN\cite{peng2021animatable} & Ours & NARF\cite{2021narf} & AN\cite{peng2021animatable} & Ours & NARF\cite{2021narf} & AN\cite{peng2021animatable} & Ours & NARF\cite{2021narf} & AN\cite{peng2021animatable} & Ours \\
\hline
S1  & 21.41 & 22.05 & \textbf{23.71} & 0.891 & 0.888 & \textbf{0.915} & 20.19 & 21.37 & \textbf{22.67} & 0.864 & 0.868 & \textbf{0.890} \\
S5 & \textbf{25.24} & 23.27 & 24.78 & \textbf{0.914} & 0.892 & 0.909 & \textbf{23.91} & 22.29 & 23.27 & \textbf{0.891} & 0.875 & 0.881 \\
S6 & 21.47 & 21.13 & \textbf{23.22} & 0.871 & 0.854 & \textbf{0.881} & 22.47 & 22.59 & \textbf{23.23} & 0.883 & 0.884 & \textbf{0.888} \\
S7 & 21.36 & 22.50 & \textbf{22.59} & 0.899 & 0.890 & \textbf{0.905} & 20.66 & 22.22 & \textbf{22.51} & 0.876 & 0.878 & \textbf{0.898} \\
S8 & 22.03 & 22.75 & \textbf{24.55} & 0.904 & 0.898 & \textbf{0.922} & 21.09 & 21.78 & \textbf{23.06} & 0.887 & 0.882 & \textbf{0.904} \\
S9 & 25.11 & 24.72 & \textbf{25.31} & 0.906 & 0.908 & \textbf{0.913} & 23.61 & 23.72 & \textbf{23.84} & 0.881 & 0.886 & \textbf{0.889} \\
S11 & 24.35 & 24.55 & \textbf{25.83} & 0.902 & 0.902 & \textbf{0.917} & 23.95 & 23.91 & \textbf{24.19} & 0.885 & 0.889 & \textbf{0.891} \\
\hline                                                
average. & 23.00 & 23.00 & \textbf{24.28} & 0.898 & 0.890 & \textbf{0.909} & 22.27 & 22.55 & \textbf{23.25} & 0.881 & 0.880 & \textbf{0.892} \\
\end{tabular}
}
\end{center}
\vspace{-1em}
\caption{\textbf{Results of the Human3.6M dataset~\cite{h36m} in terms of PSNR and SSIM.} Higher is better. ``AN" means Animatable NeRF.}
\label{table:results_h36m}
\end{table*}

\begin{figure}[t]
  \centering
  \includegraphics[width=\linewidth]{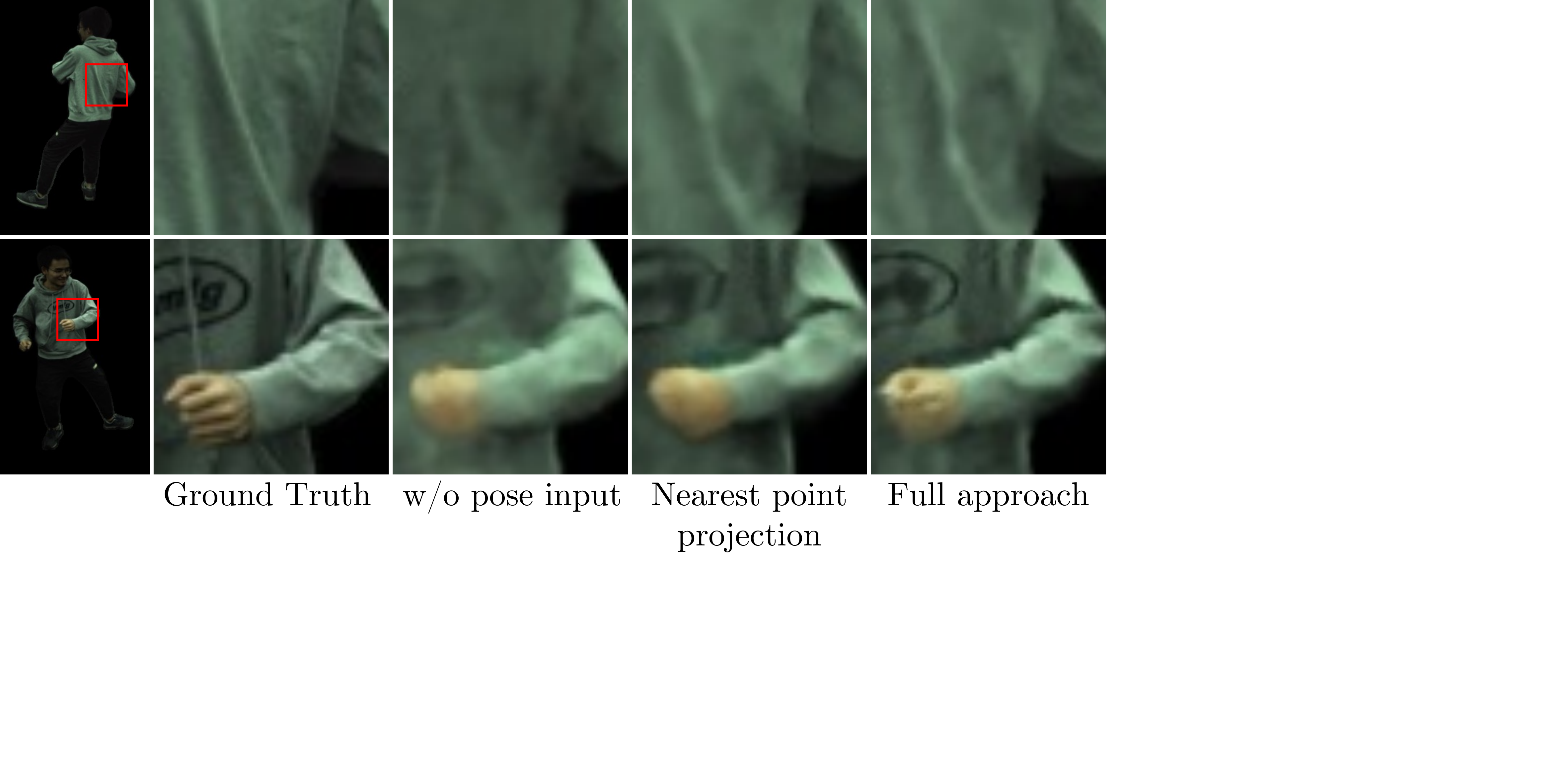}
  \vspace*{-1.4em}
  \caption{\textbf{Qualitative results of ablation studies using sequence ``Kick'' of {ZJU-MoCap} dataset.} ``w/o pose input'' indicates that skeleton pose is not used as input to NeRF. ``Nearest point projection'' means we use nearest point projection instead of proposed dispersed projection to calculate the surface-aligned representation.}
  \label{fig:ablation}
\end{figure}

\begin{figure}[t]
  \centering
  \includegraphics[width=\linewidth]{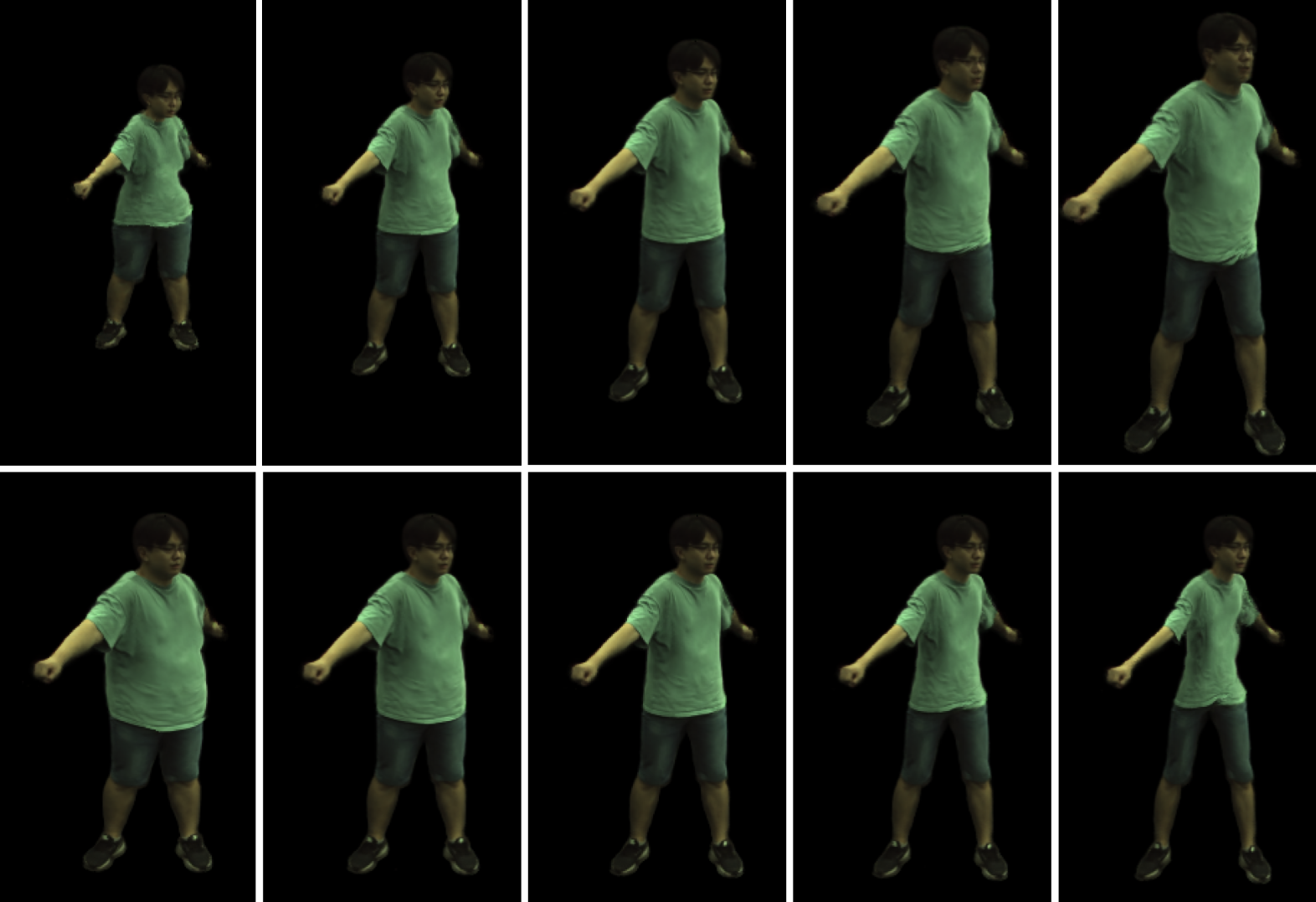}
  \caption{\textbf{Qualitative results of body shape control}. Top: change PC1 from $-4$ to $+4$. Bottom: change PC2 from $-4$ to $+4$.}
  \label{fig:shape_cont}
  \vspace*{-0.3em}
\end{figure}

\begin{figure*}[t]
  \centering
  \includegraphics[width=\linewidth]{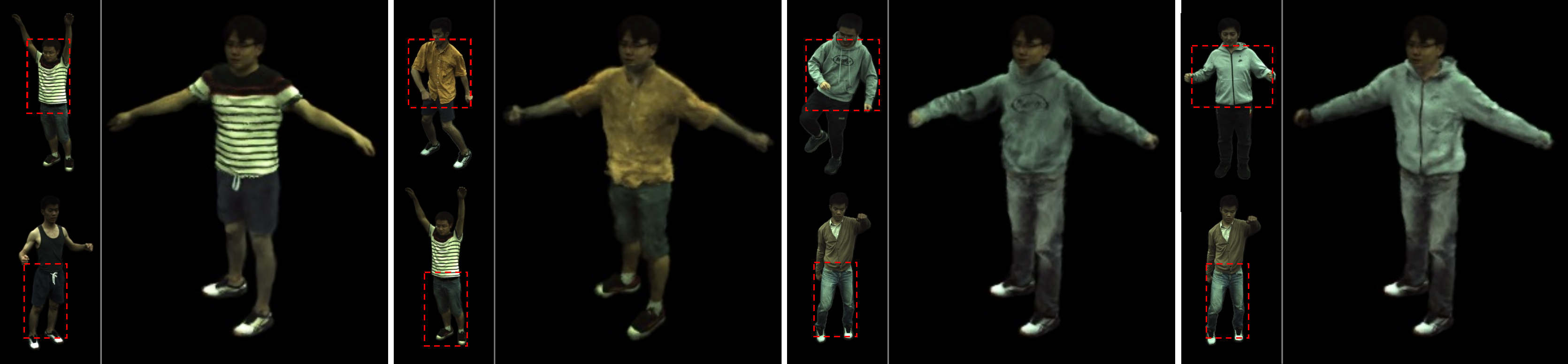}
  \caption{\textbf{Qualitative results of changing clothes}. Since the surface-aligned property of our method, we can use different trained NeRFs for different mesh surface areas. The small images on the left indicate the source appearance and the large image on the right indicates the synthesized image with novel appearance combination.}
  \label{fig:clothes_change}
  \vspace*{-0.7em}
\end{figure*}

\subsection{Datasets}
\paragraph{ZJU-MoCap~\cite{peng2021neural}} records human motion using 21 synchronized cameras and uses markerless motion capture to obtain human poses. We use four roughly evenly spaced cameras for training and the remaining cameras for testing novel view synthesis. We divide the motion sequence frames into ``training pose'' and ``unseen pose'', and use the former for training the model and the latter for testing the performance under unseen human pose.
We follow~\cite{peng2021neural} for data preprocessing and more training and testing details.

\paragraph{Human3.6M~\cite{h36m}} uses four synchronized cameras to record complex human movements and marker-based motion capture to obtain human poses. 
We use three of the cameras as training view and the remaining one for testing. Similarly, we divide the dataset into the training/unseen pose sequences. For detailed settings, we refer to~\cite{peng2021animatable}.

\paragraph{Evaluation metrics.} Following~\cite{mildenhall2020nerf}, we use two standard metrics to evaluate the performance of image synthesis: peak signal-to-noise ratio (PSNR) and structural similarity index measure (SSIM).

\subsection{Results on image synthesis}
We compare the performance of our approach with that of state-of-the-art methods. For the ZJU-Mocap dataset, we compare with Neural Body~\cite{peng2021neural}, which uses structured latent code to model the appearance of the human body. For the Human3.6M dataset, we compare with Animatable NeRF~\cite{peng2021animatable}, which learns the neural blend weight field and builds a NeRF within a canonical space. In addition, we compare with NARF~\cite{2021narf}, which takes a similar NeRF approach with an input representation based on the bone coordinates. The results are presented in \cref{table:results_zju} and \cref{table:results_h36m}. 

\paragraph{Training pose.} For the ZJU-MoCap dataset, our approach outperforms~\cite{2021narf} and shows a slightly better performance compared to~\cite{peng2021neural}, although these studies use per-frame optimization for better modeling of the training pose. For the Human3.6M dataset, our approach outperforms both~\cite{2021narf} and~\cite{peng2021animatable} by a large margin. The results show that our surface-aligned NeRF can more easily capture the shape and appearance of the human body for the training pose. The qualitative results are presented in \cref{fig:novel_view}.
From the synthesized images, we see that the proposed method has better facial details compared to the others.

\paragraph{Unseen pose.} For both datasets, the performance of our approach almost consistently outperforms~\cite{peng2021neural},~\cite{peng2021animatable}, and~\cite{2021narf}. 
Its superior performance may be attributed to its surface-aligned property as it enables fully utilizing the power of SMPL to express body mesh deformations for unseen poses.
The qualitative results are presented in \cref{fig:novel_pose}.
From the synthesized images, we see that the proposed method preserves rich details, such as folds of clothes, even for poses unseen during training, while other methods not only fail to restore details but also produce obvious artifacts.

\subsection{Ablation studies}
We use sequence ``Kick'' of the ZJU-MoCap dataset where the actual shape differs significantly from the SMPL mesh estimation (because of the loose hoodie) for an ablation study on the performance of novel view synthesis. The results are shown in \cref{fig:ablation} and \cref{table:ablation}.

\paragraph{Impact of skeleton pose conditioning.}
We train a model without the skeleton pose input in \cref{eq:pose_nerf}, which gives lower PSNR/SSIM results and blurry synthesized images.
As discussed in \cref{sec:local}, while we assume that the majority of body movements are mesh-following deformations, there may also be pose-dependent deformations that cannot be captured by the body mesh. 
Also, while we do not explicitly model the time-varying deformation components (such as using per-frame embedding in \cite{peng2021neural, peng2021animatable}), neural networks could implicitly model such components by inferring time from the skeleton pose information.

\vspace{-1em}
\paragraph{Impact of the dispersed projection.}
We use nearest point projection instead of the proposed dispersed projection for training. 
As shown in \cref{fig:ablation}, using nearest point projection leads to artifacts and loss of details.
As discussed in \cref{sec:dispersed}, the dispersed projection addresses the indistinguishability issue arising from nearest point projection.
From the illustration of the nearest point projection (\cref{fig:projection_compare}), it is easy to imagine that the farther a point is from the mesh surface, the more likely it is to be projected onto a vertex or an edge by nearest point projection. Based on this observation, we hypothesize that when the estimated mesh is close to the real 3D shape, the indistinguishability may not be so problematic; however, when there are points far from the mesh surface, such as thick clothes or fingers, it could significantly incur the model performance, which may explain the differences.

\begin{table}
\begin{center}
\begin{tabular}{c|cc}
& PSNR & SSIM \\
\hline
w/o pose input & 26.58 & 0.915 \\
Nearest point projection & 27.38 & 0.923 \\
\hline
Full approach & \textbf{27.77} & \textbf{0.927}
\end{tabular}
\end{center}
\vspace{-0.8em}
\caption{\textbf{Results of ablation studies using sequence ``Kick'' of {ZJU-MoCap} dataset in terms of PSNR and SSIM.} Higher is better. Also refer to \cref{fig:ablation}.}
\label{table:ablation}
\vspace{-0.8em}
\end{table}

\subsection{Body shape control}
Because the proposed NeRF is aligned to the body mesh surface, we can control the body shape of modeled humans by manipulating the body mesh surface. Specifically, the SMPL has 10 shape parameters obtained by principal component analysis (PCA), and by controlling them, we can obtain meshes of different body shapes. \cref{fig:shape_cont} shows the synthesized images after we changed the first and second principal components, PC1 and PC2.

\subsection{Changing clothes}
As we can replace a part of the mesh texture with another texture, our method can also replace NeRF according to the different surface areas to achieve a similar effect of ``changing clothes''. 

Suppose that we already have several proposed surface-aligned NeRF models trained on different subjects. We first segment the SMPL mesh according to the body parts, such that each triangle face belongs to a category (\eg, head, upper body, and lower body). We then decide which subject's appearance to use for each category (\eg, we use the upper body of subject 1 and the lower body of subject 2). For a query point $\textbf{x}$, we project it onto the mesh surface point $\mathbf{s}$ using the proposed dispersed projection and find the category to which the $\mathbf{s}$ belongs. Given this category, we compute the color and the density of point $\textbf{x}$ using the trained NeRF model of the corresponding subject. Finally, we can render the novel synthesized image with a combination of the appearance of multiple subjects. The synthesized results are shown in \cref{fig:clothes_change}.

\section{Discussion}
\label{sec:discuss}

\subsection{Limitations}
Currently, our method relies on a relatively accurate mesh estimation. Although we can alleviate inaccuracies by optimizing the SMPL parameters during training, some details that cannot be accurately represented through SMPL, such as hand pose, may lead to artifacts because points cannot be projected correctly onto the mesh surface. A possible solution is to replace SMPL with parametric human models in more detail, such as with \mbox{SMPL-X}~\cite{SMPL-X:2019}.

The proposed surface-aligned representation using only one surface point may have potential issues that, in some specific poses such as when the arm and the body are extremely close, some points that are projected to the body in the training pose for representing the body information, may be projected to the arm in the novel pose setting, causing that point cannot correctly represent the body information. 
It would be interesting to introduce more information, such as interrelationships of spatial points with each body part as similar to~\cite{2021narf} for further improvement.

Our dispersed projection assumes a watertight mesh with vertex normals all forming acute angles with neighboring face normals, which may be restrictive when applied to more complicated meshes.
Designing a more flexible projection method without the indistinguishability issue would broaden the applicability of surface-aligned representation.

\subsection{Future work}
The core idea of our method is to build a NeRF on the estimated mesh surface. Therefore, our method can be generalized to objects with corresponding 3D mesh, not limited to the human body, with the help of the existing 3D mesh reconstruction methods from images~\cite{kato2018renderer, kato2019vpl, liu2019softras, cmrKanazawa18, chen2019dibrender} or videos~\cite{vmr2020, Novotny_2017_ICCV, Henzler_2021_CVPR, yang2021lasr, reizenstein21co3d}.
Furthermore, combined with interactive 3D mesh deformation methods~\cite{igarashi1999sketch, jin2003sketch, kho2005sketch}, our method can enable interactive manipulations of NeRF. We expect the idea of combining the controllability of mesh and photorealistic rendering of NeRF can be used for more visual applications in the future.

\section{Conclusion}
\label{sec:conclusion}
In this paper, we present novel surface-aligned neural radiance fields for a controllable 3D human synthesis.
The proposed dispersed projection method transforms spatial points into distinguishable surface points and signed heights, which shows high compatibility with the proposed surface-aligned NeRF.
Our method not only shows a better generalization performance in the novel human pose situations, but it also supports explicit controls such as body shape change and clothes change.

\vspace{1em}
\noindent\textbf{Acknowledgements:} We thank Hiroharu Kato for helpful discussions and comments.

{\small
\bibliographystyle{ieee_fullname}
\bibliography{egbib}
}

\appendix

\section{Algorithms}
We present the pseudocodes of our proposed dispersed projection and vertex normal alignment in \cref{alg:1} and \cref{alg:2}, respectively.

\begin{algorithm}[hbt!]
\caption{Dispersed projection}\label{alg:1}
\KwIn{$\mathbf{x} \in \mathbb{R}^3$, body mesh $\mathcal{M}$}
\KwOut{Projected surface point $\mathbf{s}$}
Initialize $\mathcal{S} := []$ \\
Find the nearest surface point $\tilde{\mathbf{s}}$ to $\mathbf{x}$. \\
Find all triangles containing $\tilde{\mathbf{s}}$, denoted as $\mathcal{T}$. \\
\For{$T \in \mathcal{T}$}{
  vertex normal alignment for $T$. \\
  \If{$\mathbf{x}$ inside the parallel triangle $T'$}{
    barycentric interpolated projection $\mathbf{x} \rightarrow \mathbf{s}_T$. \\
    $\mathcal{S}$.append($\mathbf{s}_T$)
  }
}
\Return $\mathbf{s} := \underset{\mathbf{s}_T}{\mathrm{arg\,min}}\, \| \mathbf{s}_T - \mathbf{x} \|, \mathbf{s}_T \in \mathcal{S}$ \\
\end{algorithm}

\begin{algorithm}[hbt!]
\caption{Vertex normal alignment}\label{alg:2}
\KwIn{Triangle $T$}
\For{$i \in \{1, 2, 3\}$}{
  Compute two edge directions $\mathbf{e}_1, \mathbf{e}_2$ from $\mathbf{v}_i$. \\
  Orthogonally project $\mathbf{n}_i$ on plane $T$ at $\mathbf{p}_i$. \\
  Decomposite $\mathbf{p}_i - \mathbf{v}_i = c_1 \mathbf{e}_1 + c_2 \mathbf{e}_2$. \\
  $\tilde{c}_1, \tilde{c}_2 := \mathrm{max}(0, c_1), \mathrm{max}(0, c_2)$. \tcp{only consider that $\mathbf{p}_i$ falls within the inward region.} 
  $\mathbf{n}_i := \mathbf{n}_i - \tilde{c}_1 \mathbf{e}_1 - \tilde{c}_2 \mathbf{e}_2$. \tcp{alignment.}
  Normalize the length of $\mathbf{n}_i$ to 1.}
\end{algorithm}

\section{Implementation details}
\subsection{Network architecture}
\begin{figure}[t]
  \centering
  \includegraphics[width=\linewidth]{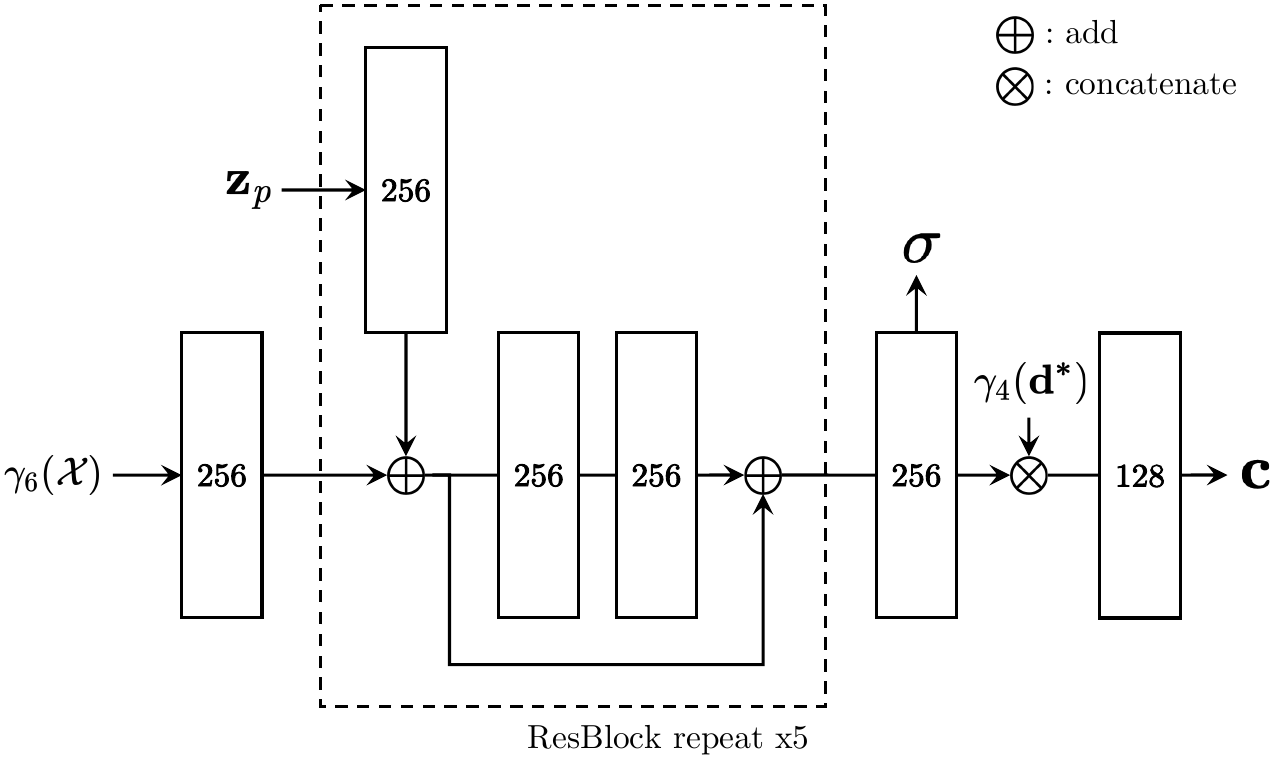}
  \caption{\textbf{Network architecture.} The network takes the positional encoding of the surface-aligned representation $\gamma_6(\mathcal{X})$ and the view direction $\gamma_4(\mathbf{d}^*)$ along with the skeleton pose embedding $\mathbf{z}_p$ and outputs the density $\sigma$ and the RGB color $\mathbf{c}$. The number in each block means the dimension of the input. All linear layers are followed by ReLU activation except the output layers of color and density.}
  \label{fig:arch}
\end{figure}

We present a NeRF network architecture in \cref{fig:arch}. We use positional encoding~\cite{mildenhall2020nerf} with the frequency $L=6$ and $L=4$ for the surface-aligned representation and the view direction, respectively. We use a three-layer, 256-dimensional vanilla graph convolutional network (GCN)~\cite{kipf2017semi} with ReLU activation for encoding the skeleton pose $p \in \mathbb{R}^{24 \times 3}$ to an embedding $\tilde{\mathbf{z}}_p \in \mathbb{R}^{24 \times 256}$ and then perform a spatial average pooling to obtain a latent code $\mathbf{z}_p \in \mathbb{R}^{256}$.

\subsection{Training settings}
We basically refer to \cite{peng2021neural} for the training setting. We use the single-level NeRF and sample 64 points along each camera ray. For points that are far from the predicted SMPL surface, we do not feed them into NeRF for faster training. Specifically, for points with signed distance $h > h_0$, our model returns the zero density and color directly. We set $h_0 = 0.2m$ in our experiments.  We conduct the training on a single Nvidia V100 GPU. Learning rate decreases exponentially from $5e^{-4}$ to $5e^{-5}$ in training. The training typically converges in about 200k iterations, which takes about 14 hours.

\section{Proof of injective mapping}
We prove that the mapping $\mathbf{x} \rightarrow \mathcal{X}$ using proposed dispersed projection is an injection under certain conditions. 
Specifically, we prove that, for spatial point $\mathbf{x} \in \mathbb{R}^3 \backslash \mathbb{I}$, the mapping $\mathbf{x} \rightarrow \mathcal{X}$ is an injection, where $\mathbb{I}$ denotes the set of all bilinear surfaces formed by the adjacent vertex normals after vertex normal alignment for all the triangle faces of a mesh. Since the volume of $\mathbb{I}$ is zero, any spatial point is almost surely in $\mathbb{R}^3 \backslash \mathbb{I}$.

The proof is based on the following assumptions:
\begin{assumption}
\label{ass:1}
Meshes are watertight and do not have triangle faces with zero area.
\end{assumption}
\begin{assumption}
\label{ass:2}
The face normal of and the vertex normals of every triangle form an acute angle. 
\end{assumption}
\begin{assumption}
\label{ass:3}
Any spatial point $\mathbf{x} \in \mathbb{R}^3$ can be projected onto the mesh surface through dispersed projection.
\end{assumption}

From the definition of dispersed projection, we notice that for $\mathbf{x} \in \mathbb{R}^3 \backslash \mathbb{I}$, it will always be mapped to a surface point $\mathbf{s}$ that is strictly inside a triangle face, \ie, not on edges. In this case, the dispersed projection is reduced to a barycentric interpolated projection based on the aligned vertex normals for the triangle face.
We first introduce the following lemma:
\begin{lemma}
\label{lemma:1}
If $\mathbf{x} \in \mathbb{R}^3 \backslash \mathbb{I}$ is outside the mesh and mapped to $\mathbf{s}$ through dispersed projection, then $\mathbf{x} - \mathbf{s} = c\mathbf{n_s}, \ c \in \mathbb{R}^+$ is satisfied for all such $\mathbf{x}$. Here $\mathbf{s}$ is a surface point that is strictly inside a triangle face, and $\mathbf{n_s}$ is a unit vector that is invariant to $\mathbf{x}$.
\end{lemma}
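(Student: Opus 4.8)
The plan is to reduce the statement to a direct computation on a single triangle and then read off the direction of $\mathbf{x}-\mathbf{s}$ from the barycentric weights. As already noted before the lemma, for $\mathbf{x} \in \mathbb{R}^3 \backslash \mathbb{I}$ the projected point $\mathbf{s}$ lies strictly inside one triangle face, so the dispersed projection coincides with the barycentric interpolated projection on that single triangle $T$. I would therefore fix $T$ with vertices $\mathbf{v}_1,\mathbf{v}_2,\mathbf{v}_3$, aligned unit vertex normals $\mathbf{n}_1,\mathbf{n}_2,\mathbf{n}_3$, and outward unit face normal $\mathbf{N}$, and work entirely within the geometry determined by these fixed data.

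First I would make the parallel triangle explicit. Writing the signed height of $\mathbf{x}$ as $t = \mathbf{N}\cdot(\mathbf{x}-\mathbf{v}_1) > 0$ (positive because $\mathbf{x}$ is outside), each vertex-normal ray $\mathbf{v}_i + \lambda\mathbf{n}_i$ meets the plane through $\mathbf{x}$ parallel to $T$ exactly when $\lambda = \lambda_i := t/(\mathbf{N}\cdot\mathbf{n}_i)$. Here Assumption~\ref{ass:2} is precisely what guarantees $\mathbf{N}\cdot\mathbf{n}_i > 0$, so that $\lambda_i > 0$ and the parallel triangle $T'$ has well-defined vertices $\mathbf{v}'_i = \mathbf{v}_i + \lambda_i\mathbf{n}_i$ lying on the correct (outward) side.

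Next comes the key computation. Let $(b_1,b_2,b_3)$, with $b_i > 0$ and $\sum_i b_i = 1$, be the barycentric coordinates of $\mathbf{x}$ in $T'$; by the definition of the projection the image is $\mathbf{s} = \sum_i b_i\mathbf{v}_i$ with the \emph{same} weights. Subtracting gives
\begin{equation}
\mathbf{x}-\mathbf{s} = \sum_i b_i(\mathbf{v}'_i-\mathbf{v}_i) = t\sum_i \frac{b_i}{\mathbf{N}\cdot\mathbf{n}_i}\mathbf{n}_i =: t\,\mathbf{w}.
\end{equation}
The crucial observation is that the weights $(b_1,b_2,b_3)$ are exactly the barycentric coordinates of $\mathbf{s}$ in $T$ (they solve $\mathbf{s}=\sum_i b_i\mathbf{v}_i$, $\sum_i b_i=1$), which are unique because $T$ is non-degenerate by Assumption~\ref{ass:1}. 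Hence $\mathbf{w}$ is a function of $\mathbf{s}$ and of the fixed triangle data alone, with no residual dependence on $\mathbf{x}$ or on $t$. I would then set $\mathbf{n_s} = \mathbf{w}/\norm{\mathbf{w}}$ and $c = t\,\norm{\mathbf{w}}$, so that $\mathbf{x}-\mathbf{s} = c\,\mathbf{n_s}$.

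Finally I would check well-definedness and positivity. Since $\mathbf{N}\cdot\mathbf{w} = \sum_i b_i = 1 \ne 0$, the vector $\mathbf{w}$ is nonzero, $\mathbf{n_s}$ is a genuine unit vector, and $c = t\,\norm{\mathbf{w}} > 0$ because $t>0$; this also re-confirms that $\mathbf{x}$ sits on the outward ray from $\mathbf{s}$. I expect the main obstacle to be the invariance claim rather than the algebra: one must argue carefully that the weights recovered from the auxiliary triangle $T'$ are forced to coincide with the barycentric coordinates of $\mathbf{s}$ in $T$, so that $\mathbf{n_s}$ truly depends only on $\mathbf{s}$. The positivity of $\mathbf{N}\cdot\mathbf{n}_i$ from Assumption~\ref{ass:2} and the non-degeneracy from Assumption~\ref{ass:1} are exactly the two places where the hypotheses are indispensable.
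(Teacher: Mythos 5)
Your proposal is correct and follows essentially the same route as the paper's proof: express $\mathbf{x}$ and $\mathbf{s}$ with common barycentric weights in $T'$ and $T$, subtract, write each vertex displacement as the height divided by $\langle \mathbf{n}_i, \mathbf{N}\rangle$ times the aligned normal, and factor out the height to isolate a direction depending only on $\mathbf{s}$. Your version is slightly more careful than the paper's in two spots the paper glosses over --- identifying the weights as the (unique) barycentric coordinates of $\mathbf{s}$ in $T$ to justify the invariance claim, and checking $\mathbf{N}\cdot\mathbf{w}=1\neq 0$ so that the normalization is well defined --- but the argument is the same.
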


\begin{proof}
Consider a triangle $T$ after vertex normal alignment as in Fig.~3(a) of the main paper. For triangle $T = (\mathbf{v}_1, \mathbf{v}_2, \mathbf{v}_3)$ and its parallel triangle $T' = (\mathbf{v}'_1, \mathbf{v}'_2, \mathbf{v}'_3)$, by the definition of barycentric interpolated projection, we can obtain:

\begin{empheq}[left = {\empheqlbrace \,}, right = {}]{align}
    \mathbf{x} &= \alpha_1 \mathbf{v}'_1 + \alpha_2 \mathbf{v}'_2 + \alpha_3 \mathbf{v}'_3 \label{first equation} \\
    \mathbf{s} &= \alpha_1 \mathbf{v}_1 + \alpha_2 \mathbf{v}_2 + \alpha_3 \mathbf{v}_3 \label{second equation}
\end{empheq}
where $(\alpha_1, \alpha_2, \alpha_3)$ is the barycentric coordinate.
Combining the above equations we can obtain:
\begin{align}
    \mathbf{x}-\mathbf{s} 
    &= \alpha_1 (\mathbf{v}'_1 - \mathbf{v}_1) + \alpha_2 (\mathbf{v}'_2 - \mathbf{v}_2) + \alpha_3 (\mathbf{v}'_3 - \mathbf{v}_3) \label{eq:3} \\
    &= \sum_{i=1,2,3} \alpha_i (\mathbf{v}_i - \mathbf{v}'_i) \label{eq:4}
\end{align}
\begin{equation}
\label{eq:5}
\mathbf{v}_i - \mathbf{v}'_i = \norm{\mathbf{v}_i - \mathbf{v}'_i} \mathbf{n}_{\mathbf{v}_i} 
                                 = \left( l \mathbin{/} \langle \mathbf{n}_{\mathbf{v}_i}, \mathbf{n}_{T} \rangle \right) \mathbf{n}_{\mathbf{v}_i},
\end{equation}
where $l$ denotes the distance $l$ between plane $T$ and $T'$, $\mathbf{n}_{\mathbf{v}_i}$ denotes the aligned vertex normal at $\mathbf{v}_i$, and $\mathbf{n}_T$ denotes the surface normal of $T$.
Substituting \cref{eq:5} for \cref{eq:4} leads to:
\begin{align}
    \mathbf{x}-\mathbf{s} 
    &= \sum_{i=1,2,3} \alpha_i \left( l \mathbin{/} \langle \mathbf{n}_{\mathbf{v}_i}, \mathbf{n}_{T} \rangle \right) \mathbf{n}_{\mathbf{v}_i} \\
    &= l \left( \sum_{i=1,2,3} \left( \alpha_i \mathbin{/} \langle \mathbf{n}_{\mathbf{v}_i}, \mathbf{n}_{T} \rangle \right) \mathbf{n}_{\mathbf{v}_i} \right) \label{eq:7}
\end{align}
The term inside the parentheses of \cref{eq:7} is invariant to $\mathbf{x}$, and we describe its direction by a unit vector $\mathbf{n_s}$, which gives:
\begin{equation}
    \label{eq:8}
    \mathbf{x} - \mathbf{s} = c\mathbf{n_s}, \ c \in \mathbb{R}^+
\end{equation}
This concludes the proof.
\end{proof}

We call $\mathbf{n_s}$ in \cref{eq:8} an \textit{interpolated normal at $\mathbf{s}$}, which only depends on the barycentric coordinates of $\mathbf{s}$ and the mesh with aligned vertex normals. We then introduce the following lemma:
\begin{lemma}
\label{lemma:2}
For $\mathbf{x} \in \mathbb{R}^3 \backslash \mathbb{I}$, the mapping $\mathbf{x} \rightarrow \left( \mathbf{s}, h \right)$ is a injection, where $h = \norm{\mathbf{x} - \mathbf{s}}$ (when $\mathbf{x}$ is outside the mesh) or $h = -\norm{\mathbf{x} - \mathbf{s}}$ (when $\mathbf{x}$ is inside the mesh).
\end{lemma}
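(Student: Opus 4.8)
The plan is to show that the pair $(\mathbf{s}, h)$ carries enough information to reconstruct $\mathbf{x}$ uniquely, which is exactly the assertion of injectivity. The entire argument rests on \cref{lemma:1}, which I would invoke as the workhorse: for any $\mathbf{x} \in \mathbb{R}^3 \backslash \mathbb{I}$ the projection $\mathbf{s}$ lies strictly inside a single triangle face, and the displacement satisfies $\mathbf{x} - \mathbf{s} = c\,\mathbf{n_s}$ with $c \in \mathbb{R}^+$, where the interpolated normal $\mathbf{n_s}$ is a unit vector determined \emph{solely} by $\mathbf{s}$ (its barycentric coordinates together with the aligned mesh), independently of $\mathbf{x}$. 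This independence is the structural fact that makes the map invertible, so most of the difficulty is already discharged and what remains is short.

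First I would treat the outside case ($h > 0$). Taking norms in \cref{eq:8} gives $c = \norm{\mathbf{x} - \mathbf{s}} = h$, so the displacement is pinned down entirely by $h$ and by $\mathbf{n_s}$, yielding the closed-form reconstruction $\mathbf{x} = \mathbf{s} + h\,\mathbf{n_s}$. Since $\mathbf{n_s}$ depends only on $\mathbf{s}$, the right-hand side is a function of $(\mathbf{s}, h)$ alone. Consequently, if two points $\mathbf{x}_1, \mathbf{x}_2$ share the same image $(\mathbf{s}, h)$, both must equal $\mathbf{s} + h\,\mathbf{n_s}$, forcing $\mathbf{x}_1 = \mathbf{x}_2$; this is precisely injectivity on the exterior. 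Next I would fold in the interior case ($h < 0$): as noted before \cref{lemma:1}, for inside points the normal direction is reverted, so the inward displacement equals $-\mathbf{n_s}$ times a positive length, and the sign convention $h = -\norm{\mathbf{x} - \mathbf{s}}$ makes the two minus signs cancel, leaving the identical formula $\mathbf{x} = \mathbf{s} + h\,\mathbf{n_s}$ with the \emph{outward} $\mathbf{n_s}$. A single reconstruction formula therefore covers both sides, and the sign of $h$ unambiguously records which side $\mathbf{x}$ lies on, so no exterior point and interior point can collide (points with $h = 0$ lie on the surface and map to themselves).

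The one place I would be careful is the sign/side consistency: I must check that the outward interpolated normal $\mathbf{n_s}$ appearing in the reconstruction is the same object whether $\mathbf{x}$ is approached from outside or inside, so that $\mathbf{x} = \mathbf{s} + h\,\mathbf{n_s}$ is genuinely well-defined as a function of $(\mathbf{s}, h)$. Here \cref{ass:2} does the work: since the face normal and the aligned vertex normals form acute angles, each inner product $\langle \mathbf{n}_{\mathbf{v}_i}, \mathbf{n}_T \rangle$ in \cref{eq:7} is strictly positive, so the denominators are positive and $\mathbf{n_s}$ is a well-defined outward unit vector for every interior face point. This closes the only gap and, combined with the reconstruction formula, completes the proof that $\mathbf{x} \mapsto (\mathbf{s}, h)$ is an injection.
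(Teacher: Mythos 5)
Your proposal is correct and follows essentially the same route as the paper: invoke \cref{lemma:1} to get $\mathbf{x}-\mathbf{s}=h\,\mathbf{n_s}$ with $\mathbf{n_s}$ determined by $\mathbf{s}$ alone, conclude that $(\mathbf{s},h)$ reconstructs $\mathbf{x}$, and handle the interior by reverting the normals; your explicit remarks that the sign of $h$ prevents an interior/exterior collision and that \cref{ass:2} keeps the denominators in \cref{eq:7} positive are welcome elaborations of steps the paper leaves implicit. The only caveat is your claim that the interior case reduces to the \emph{identical} formula $\mathbf{x}=\mathbf{s}+h\,\mathbf{n_s}$ with the outward $\mathbf{n_s}$: the alignment applied to the inverted vertex normals need not produce exactly the negated outward aligned normals, but this does not matter for injectivity since the interior interpolated normal is still a function of $\mathbf{s}$ alone.
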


\begin{proof}
We first prove the case of when $\mathbf{x}$ is outside the mesh, \ie, $h = \norm{\mathbf{x} - \mathbf{s}}$.
From the definition of $h$, it is exactly $c$ in \cref{eq:8}, which gives:
\begin{equation}
    \label{eq:9}
    \mathbf{x} - \mathbf{s} = h\mathbf{n_s}
\end{equation}
Consider the following:
\begin{empheq}[left = {\empheqlbrace \,}, right = {}]{align}
    \mathbf{x}_1 - \mathbf{s}_1 = h_1\mathbf{n}_{\mathbf{s}_1} \label{eq:10} \\
    \mathbf{x}_2 - \mathbf{s}_2 = h_2\mathbf{n}_{\mathbf{s}_2} \label{eq:11}
\end{empheq}
where $\mathbf{s}_1 = \mathbf{s}_2$, $h_1 = h_2$. For the same surface point $\mathbf{s}_1 = \mathbf{s}_2$, they have the same \textit{interpolated normal}, that is, $\mathbf{n}_{\mathbf{s}_1} = \mathbf{n}_{\mathbf{s}_2}$.
Therefore, from \cref{eq:10} and \cref{eq:11}, we can obtain:
\begin{equation}
    \mathbf{x}_1 = \mathbf{x}_2
\end{equation}
Above equations indicate that:
\begin{equation}
    \label{eq:13}
    \left( \mathbf{s}_1, h_1 \right) = \left( \mathbf{s}_2, h_2 \right) \Rightarrow \mathbf{x}_1 = \mathbf{x}_2
\end{equation}
which concludes that the mapping $\mathbf{x} \rightarrow \left( \mathbf{s}, h \right)$ is a injection.
The case of when $\mathbf{x}$ is inside the mesh is proved similarly by inverting the face and vertex normals of a triangle face and applying \cref{lemma:1}.
\end{proof}

We finally introduce the following lemma:
\begin{lemma}
\label{lemma:3}
The mapping $ \mathbf{s}\rightarrow  \mathbf{s}_c$ is a injection, where $\mathbf{s}_c$ is the corresponding surface point on the T-pose mesh.
\end{lemma}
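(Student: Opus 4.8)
The plan is to exploit two structural facts: that every admissible surface point lies in the strict interior of a unique triangle with a unique barycentric coordinate, and that the posed and T-pose meshes share the same connectivity, so that $\mathbf{s} \rightarrow \mathbf{s}_c$ is simply the barycentric-preserving correspondence taken face by face. Concretely, for $\mathbf{x} \in \mathbb{R}^3 \backslash \mathbb{I}$ the reduction already established before \cref{lemma:1} guarantees that $\mathbf{s}$ is strictly inside a single triangle $T = (\mathbf{v}_1, \mathbf{v}_2, \mathbf{v}_3)$; writing $\mathbf{s} = \sum_i \alpha_i \mathbf{v}_i$ with all $\alpha_i > 0$ and $\sum_i \alpha_i = 1$, the corresponding T-pose point is $\mathbf{s}_c = \sum_i \alpha_i \mathbf{v}_i^c$, where $(\mathbf{v}_1^c, \mathbf{v}_2^c, \mathbf{v}_3^c)$ are the T-pose positions of the same three vertices. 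I would therefore prove injectivity by splitting into the case where two preimages lie in the same face and the case where they lie in different faces.

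First I would handle the same-face case, where the two preimages share a triangle and have images $\sum_i \alpha_i \mathbf{v}_i^c$ and $\sum_i \beta_i \mathbf{v}_i^c$. The map $(\alpha_1, \alpha_2, \alpha_3) \mapsto \sum_i \alpha_i \mathbf{v}_i^c$, restricted to the affine plane $\sum_i \alpha_i = 1$, is affine with linear part spanned by the edge vectors $\mathbf{v}_2^c - \mathbf{v}_1^c$ and $\mathbf{v}_3^c - \mathbf{v}_1^c$; by \cref{ass:1} the T-pose triangle has nonzero area, so these edges are linearly independent and the map is injective in barycentric coordinates. Equal images therefore force $\alpha_i = \beta_i$, and since the two preimages use the same posed vertices, $\mathbf{s}_1 = \sum_i \alpha_i \mathbf{v}_i = \sum_i \beta_i \mathbf{v}_i = \mathbf{s}_2$ follows at once.

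Next I would treat the cross-face case. Suppose $\mathbf{s}_1$ is strictly inside face $T_a$ and $\mathbf{s}_2$ strictly inside a distinct face $T_b$, yet their images coincide. Because all barycentric weights are strictly positive, the common image lies in the open interior of the T-pose triangle $T_a^c$ and simultaneously in the open interior of $T_b^c$. For a watertight manifold mesh the faces tile the surface so that the open interiors of distinct faces are disjoint on the surface; provided the T-pose embedding does not self-intersect, these interiors are disjoint as subsets of $\mathbb{R}^3$ as well, contradicting the coincidence of the images. Combining the two cases yields injectivity.

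The hard part will be justifying the cross-face step cleanly. The topological disjointness of distinct face interiors is immediate from watertightness in \cref{ass:1}, but it only holds on the abstract surface; transferring it to disjointness in $\mathbb{R}^3$ requires that the T-pose mesh be free of self-intersections, so that two distinct faces never occupy the same spatial region. I would therefore either add non-self-intersection of the T-pose as an explicit hypothesis (it holds for the SMPL T-pose, whose canonical configuration has separated limbs), or, more robustly, regard $\mathbf{s}_c$ as carrying its face index together with its barycentric coordinate, in which case cross-face collisions are excluded by definition and only the same-face argument above is required.
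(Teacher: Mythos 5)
Your proposal is correct and follows essentially the same route as the paper, which simply observes that $\mathbf{s}_c$ has the same barycentric coordinates in the same (non-degenerate, non-self-intersecting) T-pose triangle and declares the result trivial; your same-face and cross-face case split is just the careful expansion of that one-liner. Your closing caveat is well taken and actually sharper than the paper's own justification: the paper asserts that watertightness implies no self-intersection, which is not a valid implication in general, so your suggested remedies---adding non-self-intersection of the T-pose embedding as an explicit hypothesis, or letting $\mathbf{s}_c$ carry its face index so that only the same-face argument is needed---genuinely tighten the argument rather than merely restate it.
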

\begin{proof}
From Assumption~\ref{ass:2}, the shared T-pose mesh is watertight and thus does not have self-intersection. From the definition of $\mathbf{s}_c$, that is, $\mathbf{s}_c$ is inside the same triangle with the same barycentric coordinates as $\mathbf{s}$, the proof is trivial.
\end{proof}

From \cref{lemma:2} and \cref{lemma:3}, we prove that for $\mathbf{x} \in \mathbb{R}^3 \backslash \mathbb{I}$, the mapping $\mathbf{x} \rightarrow \mathcal{X}$ is an injection.
\begin{proof}
\cref{lemma:3} indicates that
\begin{equation}
    \label{eq:14}
    {\mathbf{s}_c}_1 = {\mathbf{s}_c}_2 \Rightarrow \mathbf{s}_1 = \mathbf{s}_2.
\end{equation}
Considering $h_1 = h_2 (h_1, h_2 \in \mathbb{R})$, it immediately follows that 
\begin{equation}
    \label{eq:15}
    ({\mathbf{s}_c}_1, h_1) = ({\mathbf{s}_c}_2, h_2) \Rightarrow (\mathbf{s}_1, h_1) = (\mathbf{s}_2, h_2).
\end{equation}
Combining \cref{eq:13} and \cref{eq:15}, we can obtain
\begin{equation}
    ({\mathbf{s}_c}_1, h_1) = ({\mathbf{s}_c}_2, h_2) \Rightarrow \mathbf{x}_1 = \mathbf{x}_2,
\end{equation}
which is exactly
\begin{equation}
    \mathcal{X}_1 = \mathcal{X}_2 \Rightarrow \mathbf{x}_1 = \mathbf{x}_2.
\end{equation}
This concludes the proof.
\end{proof}

\end{document}